\newtheorem*{theorem}{Theorem}
\newtheorem{lemma}{Lemma}
\newtheorem*{cor}{Corollary}
\newtheorem{definition}{Definition}
\title{Understanding Deep Learning Generalization by Maximum Entropy}
\author{Guanhua~Zheng \\
University of Science and Technology of China \\
Hefei, China \\
\texttt{zhenggh@mail.ustc.edu.cn} \\
\And
Jitao~Sang \\
School of Computer and Information Technology\\
Beijing Jiaotong University\\
Beijing, China  \\
\texttt{\{jtsang\}@bjtu.edu.cn} \\
\AND
Changsheng Xu \\
Institute of Automation \\
Chinese Academy of Sciences\\
Beijing, China  \\
\texttt{\{csxu\}@nlpr.ia.ac.cn}
}
\begin{document}

\maketitle

\begin{abstract}
Deep learning achieves remarkable generalization capability with overwhelming number of model parameters. Theoretical understanding of deep learning generalization receives recent attention yet remains not fully explored. This paper attempts to provide an alternative understanding from the perspective of maximum entropy. We first derive two feature conditions that softmax regression strictly apply maximum entropy principle. DNN is then regarded as approximating the feature conditions with multilayer feature learning, and proved to be a recursive solution towards maximum entropy principle. The connection between DNN and maximum entropy well explains why typical designs such as shortcut and regularization improves model generalization, and provides instructions for future model development.
\end{abstract}

\section{Introduction}
Deep learning has achieved significant success in various application areas. Its success has been widely ascribed to the remarkable generalization ability. Recent study shows that with very limited training data, a 12-layer fully connected neural network still generalizes well while kernel ridge regression easily overfits with polynomial kernels of more than 6 orders~\citep{wu2017towards}. Classical statistical learning theories like Vapnik-Chervonenkis (VC) dimension~\citep{maass1994neural} and Rademacher complexity~\citep{neyshabur2015norm} evaluate generalization based on the complexity of the target function class. It is suggested that the models with good generalization capability are expected to have low function complexity. However, most successful deep neural networks already have over 100 hidden layers, e.g., ResNet~\citep{he2016deep} and DenseNet~\citep{huang2016densely} for image recognition. The number of model parameters in these cases is even larger than the number of training samples. Statistical learning theory cannot well explain the generalization capability of deep learning models~\citep{zhang2016understanding}.

Maximum Entropy (ME) is a general principle for designing machine learning models. Models fulfilling the principle of ME make least hypothesis beyond the stated prior data, and thus lead to least biased estimate possible on the given information~\citep{jaynes1957information}. Appropriate feature functions are critical in applying ME principle and largely decide the model generalization capability~\citep{berger1996maximum}. Different selections of feature functions lead to different instantiations of maximum entropy models~\citep{malouf2002comparison,yusuke2002maximum}. The most simple and well-known instantiation is that ME principle invents identical formulation of softmax regression by selecting certain feature functions and treating data as conditionally independent~\citep{manning2003optimization}. It is obvious that softmax regression has no guaranty of generalization, indicating that inappropriate feature functions and data hypothesis violates ME principle and undermines the model performance. It remains not fully studied how to select feature functions to maximally fulfill ME principle and guarantee the generalization capability of ME models. Maximum entropy provides a potential but not-ready way to understand deep learning generalization.

This paper is motivated to improve the theory behind applying ME principle and use it to understand deep learning generalization. We research on the feature conditions to equivalently apply ME principle, and indicates that deep neural networks (DNN) is essentially a recursive solution to approximate the feature conditions and thus maximally fulfill ME principle.
\begin{itemize}
\item In Section 2, we first revisit the relation between generalization and ME principle, and conclude that models well fulfilling ME principle requires least data hypothesis so to possess good generalization capability. One general guideline for feature function selection is to transfer the hypothesis on input data to the constrain on model features~\footnote{~\small{Here ``feature'' refers to the variable directly used by machine learning models like softmax regression. To avoid confusion, from now on, we will refer to the feature function in ME models as ``predicate function''~\citep{jeon2004using}.}}. This demonstrates the role of feature learning in designing ME models.
\item Section 3 addresses what features to learn. Specifically, we derive two feature conditions to make softmax regression strictly equivalent to the original ME model (denoted as \emph{Maximum Entropy Equivalence Theorem}). That is, if the utilized features meet the two conditions, simple softmax regression model can fulfill ME principle and guarantee generalization. These two conditions actually specify the goal of feature learning.
\item Section 4 resolves how to meet the feature conditions and connects DNN with ME. Based on Maximum Entropy Equivalence Theorem, viewing the output supervision layer as softmax regression, the DNN hidden layers before the output layer can be regarded as learning features to meet the feature conditions. Since the feature conditions are difficult to be directly satisfied, they are optimized and recursively decomposed to a sequence of manageable problems. It is proved that, standard DNN uses the composition of multilayer non-linear functions to realize the recursive decomposition and uses back propagation to solve the corresponding optimization problem.
\item Section 5 employs the above ME interpretation to explain some generalization-related observations of DNN. Specifically, from the perspective of ME, we provide an alternative way to understand the connection between deep learning and Information Bottleneck~\citep{shwartz2017opening}. Theoretical explanations on typical generalization design of DNN, e.g., shortcut, regularization, are also provided at last.
\end{itemize}

The contributions are summarized in three-fold:
\begin{enumerate}
\item We derive the feature conditions that softmax regression strictly apply maximum entropy principle. This helps understanding the relation between generalization and ME models, and provides theoretical guidelines for feature learning in these models.
\item We introduce a recursive decomposition solution for applying ME principle. It is proved that DNN maximally fulfills maximum entropy principle by multilayer feature learning and softmax regression, which guarantees the model generalization performance.
\item Based on the ME understanding of DNN, we provide explanations to the information bottleneck phenomenon in DNN and typical DNN designs for generalization improvement.
\end{enumerate}

\section{Revisiting Generalization and Maximum Entropy}
In machine learning, one common task is to fit a model to a set of training data. If the derived model makes reliable predictions on unseen testing data, we think the model has good generalization capability. Traditionally, overfitting refers to a model that fits the training data too well but generalize poor to testing data, while underfitting refers to a model that can neither fits the training data nor generalize to testing data~\citep{vapnik1998statistical}.

As a criterion for learning machine learning models, ME principle makes null hypothesis beyond the stated prior data $(X,Y)$ where $X,Y$ denote the original sample representation and label respectively. To facilitate the discussion between generalization and maximum entropy, we revisit generalization, overfitting and underfitting by how much data hypothesis is assumed by the model:
\begin{itemize}
\item \textbf{Underfitting}: Underfitting occurs when the model's data hypothesis is not satisfied by the training data.
\item \textbf{Overfitting}: Overfitting occurs when the model's data hypothesis is satisfied by the training data, but not satisfied by the testing data.
\item \textbf{Generalization}: According to ME principle, a model with good generalization capability is expected to have as less extra hypothesis on data $(X,Y)$ as possible.
\end{itemize}
The above interpretation of underfitting and overfitting can be illustrated with the toy example in Fig.~\ref{fig1}(left). The underfitting model in solid line assumes linear relation on $(X,Y)$, which is not satisfied by the training data. The model in dot dash line assumes 5-order polynomial relation on $(X,Y)$, which perfectly fits to the training data. However, it is obvious that the hypothesis generalizes poorly to testing data and the 5-order polynomial model tends to overfitting. A coarse conclusion reaches that, introducing extra data hypothesis, whether or not fitting well to the training data, will lead to degradation of model generalization capability.

\begin{figure}[htbp]
\begin{minipage}[b]{0.38\linewidth}
\centering
\includegraphics[width=0.99\textwidth]{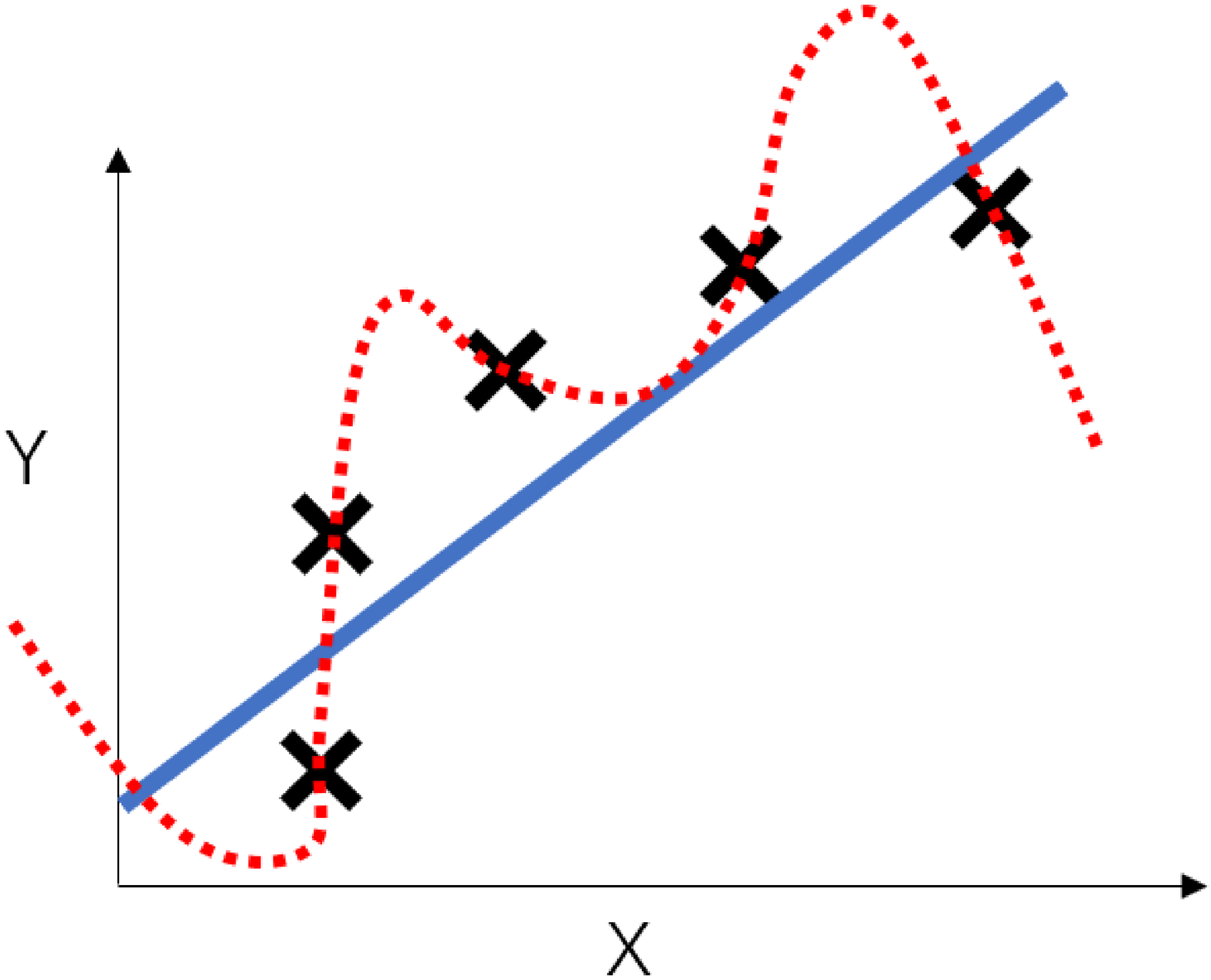}
\end{minipage}
\begin{minipage}[b]{0.619\linewidth}
\centering
\includegraphics[width=0.99\textwidth]{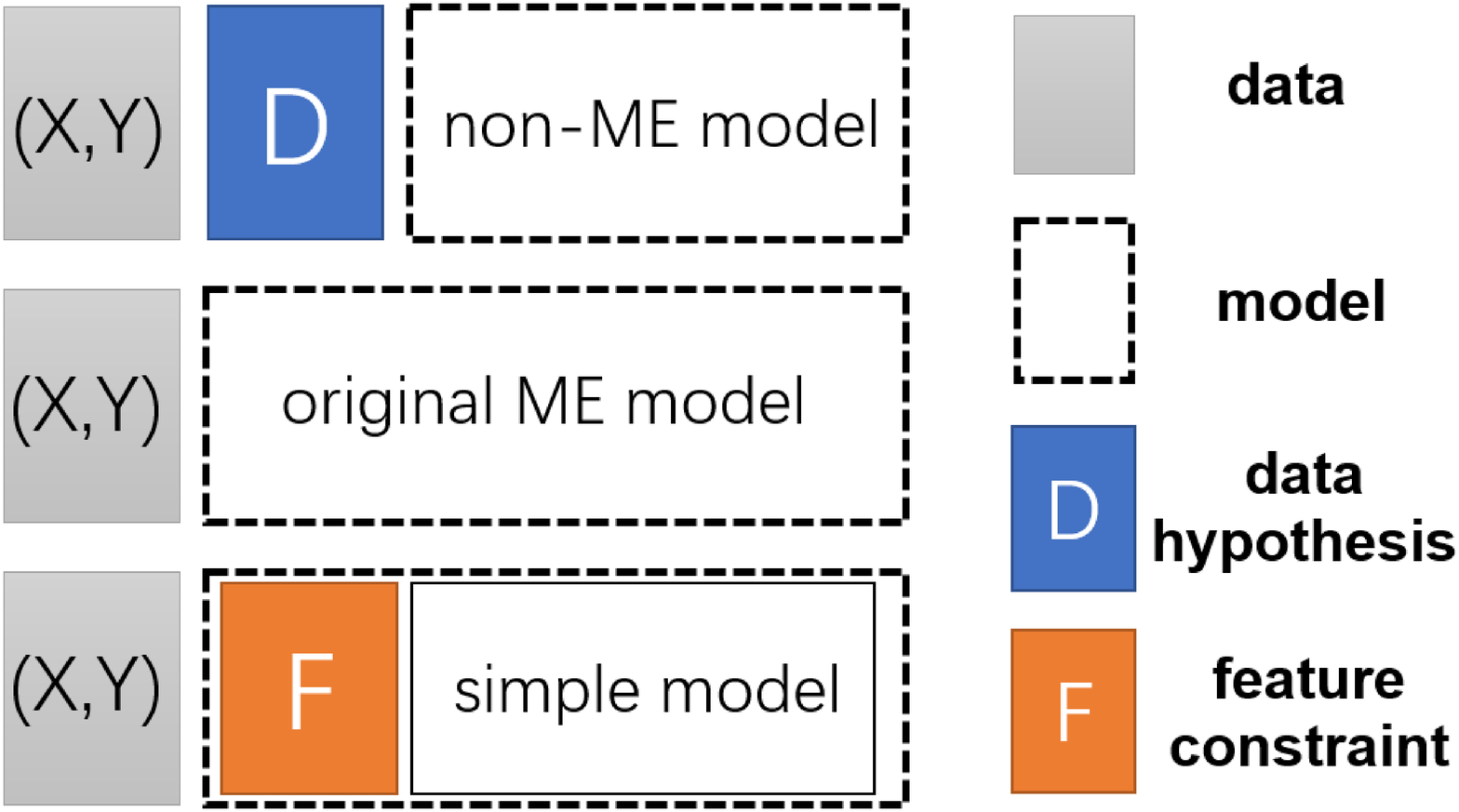}
\end{minipage}
\caption{(\textbf{Left}) fitting 5 data points with linear (solid line) and 5-order (dot dash line) polynomials. (\textbf{Right}) illustration of different model settings with/without data hypothesis; from top to bottom: non-ME model with data hypothesis, original ME model without data hypothesis, simple model with feature constraint (equivalent to original ME).}
 \label{fig1}
 \end{figure}

One question arises: why ME models cannot guarantee good generalization? Continuing the discussion in \emph{Introduction}, to enable the enumeration of predicate states, most ME models explicitly or implicitly introduce extra data hypothesis, e.g., softmax regression assumes independent observations when applying ME principle. Imposing extra data hypothesis actually violates the ME principle and degrades the model to non-ME (Maximum Entropy) model. The dilemma is: generalization requires no extra data hypothesis, but it is difficult to derive simple models without data hypothesis. Is there solution to apply ME principle without imposing hypothesis on the original data?

While the input original data $(X,Y)$ is fixed and maybe not compatible with the hypothesis, we can introduce model feature $T$ sufficient to represent data, and transfer the data hypothesis to feature constraint. Ideally the model defined on feature $T$ is a simple ME model (e.g., softmax regression), so that we can easily apply ME principle without imposing extra data hypothesis. In this case, the simple model plus feature constraint constitutes an equivalent implementation to ME principle and possesses good generalization capability. Fig.~\ref{fig1}(right) illustrates these model settings with/without data hypothesis. It is easy to see that, from the perspective of applying ME, feature learning works between data and feature, with goal to realizing the feature constraints.

\section{Feature Conditions for Maximum Entropy Equivalence}
According to the above discussions, when applying ME principle, the problem becomes how to identify the equivalent feature constraints and simple models. Since the output layer of DNN is usually softmax regression, this section will explore what feature constraints can make softmax regression equivalent to the original ME model.

\subsection{Review of Original ME Model \& Feature-based Softmax Model}
We first review the definition of original ME model and feature-based softmax model in this subsection.
Note that in defining the original ME model, instead of using predicate functions as most ME models, we deliver the constraints of $(X,Y)$ with joint distribution equality~\footnote{~\small{In the early paper introducing ME models, predicate functions are used to instantiate ME principle for model derivation. It is easily to find that according to the original predicate function defined in~\citep{berger1996maximum}, the expectation over predicate functions is exactly joint distribution.}}. Before defining the softmax model, to facilitate the transfer of data hypothesis to feature constraint, we first provide the definition of feature $T$ over input data $X$, and then derive the general formulation of feature-based ME model. Feature-based softmax model can be seen as a special case of feature-based ME model.

\begin{definition}
(Original Maximum Entropy Model) \quad  Supposing the dataset has input $X$ and label $Y$, the task is to find a good prediction of $Y$ using $X$. The prediction $\hat{Y}$ needs to maximize the conditional entropy $H(\hat{Y}|X)$ while preserving the same distribution with data $(X,Y)$. This is formulated as:
\begin{equation}\label{1}
  \min\ -H(\hat{Y}|X)
\end{equation}
$$
s.t. \; P(X,Y)=P(X,\hat{Y}), \; \sum_{\hat{Y}}{P(\hat{Y}|X)}=1
$$
This optimization question can be solve by lagrangian multiplier method :
\begin{equation*}
\begin{split}
\mathcal{L}= & \sum_{X,\hat{Y}}{P(X)P(\hat{Y}|X)log(P(\hat{Y}|X))} + \omega_0 \left(1-\sum_{\hat{Y}}{P(\hat{Y}|X)}\right)\\
&+\sum_{X,Y}{\omega_i\left( P(X,Y)-P(X)P(\hat{Y}=Y|X) \right)}
\end{split}
\end{equation*}

The above equation can be equivalently written with the original defined predicate function in~\citep{berger1996maximum}:
\begin{equation*}
\begin{split}
\mathcal{L}= & \sum_{X,\hat{Y}}{P(X)P(\hat{Y}|X)log(P(\hat{Y}|X))} + \omega_0 \left(1-\sum_{\hat{Y}}{P(\hat{Y}|X)}\right)\\
&+\sum_i{\omega_i\left( \sum_{X,Y}{P(X,Y)f_i(X,Y)}-\sum_{X,\hat{Y}}{P(X)P(\hat{Y}|X)f_i(X,\hat{Y})} \right)}
\end{split}
\end{equation*}
where $f_i(X,Y)$ is predicate function, which equalizes 1 when $(X,Y)$ satisfies a certain status:
\begin{equation}\label{4}
f_i(X,Y)=\left\{
\begin{aligned}
1,\quad X=x_i , Y=y_i \\
0,\qquad \qquad \ \ others
\end{aligned}
\right.
\end{equation}

The solution to the above problem is:
\begin{equation}\label{5}
  P_\omega(\hat{Y}=y|X=x)=\frac{1}{Z_\omega(x)} exp \left(\sum_i{\omega_i f_i(x,y)} \right)
\end{equation}
\begin{equation}\label{6}
  Z_\omega(x)=\sum_{y}{exp \left(\sum_i{\omega_i f_i(x,y)} \right)}
\end{equation}
\end{definition}

\begin{definition}
(Feature-based Maximum Entropy Model) \quad $T$ is defined as a set of features over input $X$: $T$ is only related to $X$, and $t_i$ denotes the confidence of feature $T_i$ happening in input status $x$. In other words, $t_i(x)=P(T_i=1|X=x)$, and $P(T_i=0|X=x)=1-t_i(x)$.

According to the above definition of feature $T$, feature-based maximum entropy model can be formulated as :
\begin{equation}\label{6x}
  \min\ -H(\hat{Y}|T)
\end{equation}
$$
s.t. \; E_{P(T,Y)}=E_{P(T,\hat{Y})}, \; \sum_{\hat{Y}}{P(\hat{Y}|T)}=1
$$
\end{definition}

\begin{definition}
(Feature-based Softmax Model) \quad Feature-based softmax model is a special case of feature-based ME model with manageable predicate functions.  The loglinear solution of feature-based softmax regression model based on feature $T$ is:
\begin{equation}\label{7}
  P(\hat{Y}=y|X=x)=\frac{1}{Z(x)} exp \left(b(y)+\sum_i{\lambda_i(y) t_i(x)} \right)
\end{equation}
\begin{equation}\label{8}
  Z(x)=\sum_{y}{exp \left(b(y)+\sum_i{\lambda_i(y) t_i(x)} \right)}
\end{equation}
where $\lambda_i(y)$ and $b(y)$ denote functions of $y$: $\lambda_i(y)$ is weight for feature $T_i$, and $b(y)$ is the bias term in softmax regression.
\end{definition}

\subsection{Maximum Entropy Equivalence Theorem}
From Eqn.~\eqref{5} and Eqn.~\eqref{6}, we find it impossible to traverse all status of $(X,Y)$, making the original ME problem difficult to solve. To address this, many studies are devoted to designing special kind of predicate functions to make the problem solvable. However, recalling the discussion on ME and generalization in Section 2, if extra data hypothesis is imposed on $(X,Y)$, the generalization capability of the derived ME model will be undermined. An alternative solution is to design the predicate function by imposing constraints on intermediate feature $T$ instead of directly on input data $(X,Y)$.

On imposing the feature constraints, two issues need to be considered: (1) not arbitrary $T$ makes the feature-based ME model equivalent to the original ME model; ~(2) under the premise of equivalence, $T$ should make the derived ME model solvable (like the softmax regression). Based on these considerations, we prove and derive two necessary and sufficient feature conditions to make feature-based softmax regression (\emph{Definition 3}) strictly equivalent to the original ME model (\emph{Definition 1}). The theorem is denoted as \emph{Maximum Entropy Equivalence Theorem}.

\begin{theorem}[Maximum Entropy Equivalence Theorem]
~~Given the input data $X,Y$ and feature $T$, the necessary and sufficient conditions of feature-based softmax model equivalent to the original maximum entropy model are:\vspace{1mm}\\
\textbf{$<$condition 1$>$}:~ $X$ and $Y$ are conditionally independent given $T$;\vspace{0.6mm}\\
\textbf{$<$condition 2$>$}:~ all features of $T$: $\{T_1,\cdots,T_i,\cdots,T_n\}$ are conditionally independent given $Y$.
\end{theorem}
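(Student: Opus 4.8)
The plan is to prove the biconditional by routing the equivalence through the intermediate feature-based ME model of \emph{Definition 2}, so that the two conditions decouple: \textbf{condition 1} controls the passage from the original model on $(X,Y)$ to the feature-based model on $(T,Y)$, while \textbf{condition 2} controls the passage from the feature-based ME model to its softmax (log-linear) specialization in \emph{Definition 3}. First I would record that, with the indicator predicates of Eqn.~\eqref{4}, expectation matching $P(X,Y)=P(X,\hat Y)$ pins the original ME solution down to $P(\hat Y=y\mid X=x)=P(Y=y\mid X=x)$; hence ``equivalence'' amounts to asking when the softmax form of Eqn.~\eqref{7} can reproduce the true posterior $P(Y\mid X)$.

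For sufficiency, I would assume both conditions and build the softmax solution explicitly. Since $T$ is a feature of $X$, \textbf{condition 1} ($X\perp Y\mid T$) yields $P(Y\mid X)=P(Y\mid T)$, so that optimizing $H(\hat Y\mid T)$ in place of $H(\hat Y\mid X)$ loses nothing and the two ME problems share a solution. Then \textbf{condition 2} gives the factorization $P(T\mid Y)=\prod_i P(T_i\mid Y)$; applying Bayes' rule and taking logarithms,
\begin{equation*}
\log P(Y=y\mid T)=\log P(Y=y)+\sum_i\log P(T_i\mid Y=y)-\log P(T),
\end{equation*}
and, writing each binary factor as $P(T_i\mid y)=p_i(y)^{T_i}(1-p_i(y))^{1-T_i}$, the exponent collapses to the affine form $b(y)+\sum_i\lambda_i(y)\,T_i$ with $\lambda_i(y)=\log\frac{p_i(y)}{1-p_i(y)}$ and $b(y)$ absorbing the prior and the $\log(1-p_i(y))$ terms, while $-\log P(T)$ is swept into the normalizer $Z$. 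Replacing the bit $T_i$ by the confidence $t_i(x)=E[T_i\mid X=x]$ that serves as the softmax input recovers precisely Eqns.~\eqref{7}--\eqref{8}, so the softmax model coincides with the original ME model.

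For necessity, I would run the two reductions in reverse, leaning on the uniqueness of the ME maximizer (the program of \emph{Definition 1} maximizes a strictly concave entropy over a convex constraint set, so its solution is unique). Assuming the softmax model equals the original ME model, the reproduced posterior must already lie in the log-linear family that is \emph{affine} in the features $t_i$; comparing the exponent across all feature configurations forces the cross-feature interaction terms to vanish, which is equivalent to $P(T\mid Y)=\prod_i P(T_i\mid Y)$, i.e.\ \textbf{condition 2}. Likewise, the fact that a model built on $T$ alone reproduces $P(Y\mid X)$ exactly forces $T$ to be a sufficient statistic for $Y$, namely $P(Y\mid X,T)=P(Y\mid T)$, which (with $T$ a feature of $X$) is \textbf{condition 1}.

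The main obstacle I expect is the necessity direction, and specifically arguing that the \emph{equality} of two conditional distributions --- one constrained to the affine-in-$t_i$ softmax family, the other the unconstrained ME posterior --- forces the independence structure rather than holding only for special parameter values. I would discharge this by matching log-odds ratios $\log\frac{P(Y=y\mid T)}{P(Y=y'\mid T)}$ as functions of the feature vector and invoking the identifiability of the affine expansion, together with care over the stochastic link between the bits $T_i$ and the confidences $t_i(x)=E[T_i\mid X=x]$ that are the actual softmax inputs.
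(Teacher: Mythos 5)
Your overall architecture --- \textbf{condition 1} governing the passage from the original ME model to the feature-based ME model, \textbf{condition 2} governing the passage from the feature-based ME model to its softmax specialization --- is exactly the paper's decomposition (its Lemmas 1--3 handle the first passage, its predicate-reduction argument in Appendix A the second), and your opening observation that the constraint $P(X,Y)=P(X,\hat Y)$ pins the original ME solution to the true posterior $P(Y\mid X)$ is a clean sharpening that the paper leaves implicit. The necessity direction also differs in mechanics (you invoke uniqueness of the ME maximizer plus identifiability of the affine expansion; the paper proves necessity of condition 1 via its Lemma 2 and, frankly, never rigorously proves necessity of condition 2), and your version is a reasonable sketch there. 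The problem is in your sufficiency argument for condition 2.

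The gap: you derive log-linearity of $P(Y\mid T)$ in the \emph{binary realizations} $T_i$ (the classical Naive-Bayes-implies-logistic computation) and then ``replace the bit $T_i$ by the confidence $t_i(x)=E[T_i\mid X=x]$'' to recover Eqns.~\eqref{7}--\eqref{8}. That replacement is not valid in general. In the paper's setup $T$ is stochastic given $X$ (only $t_i(x)=P(T_i=1\mid X=x)$ is specified, not a deterministic map), so condition 1 gives $P(Y=y\mid X=x)=\sum_{\tau}P(T=\tau\mid X=x)\,P(Y=y\mid T=\tau)$: a \emph{mixture} of posterior values over bit configurations $\tau$, which is not the softmax evaluated at the mean configuration $t(x)$, because the softmax is nonlinear in its inputs. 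Already with a single binary feature this fails: the mixture is affine in $t_1(x)$, whereas the model of \emph{Definition 3} is a logistic function of an affine function of $t_1(x)$, and these function families coincide only in degenerate cases (e.g.\ $t_1(x)\in\{0,1\}$, i.e.\ deterministic features). The paper never hits this obstruction because it never applies Bayes' rule to realized bits: it defines predicate functions whose \emph{values} are $t_i(x)$ and $1-t_i(x)$, uses condition 2 only to shrink the constraint set from joint matching $P(T,Y)=P(T,\hat Y)$ to marginal matching $P(T_i,Y)=P(T_i,\hat Y)$, and then reads off the affine-in-$t_i(x)$ exponent directly from the Lagrangian dual of the ME problem --- no exchange of expectation with a nonlinearity is ever required. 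To repair your route you must either assume $T$ is deterministic given $X$, or reformulate the condition-2 step as the paper does, working inside the ME dual with constraints expressed through $t_i(x)$; the same repair is needed for your necessity argument for condition 2, since it too compares exponents over bit configurations while the model being matched takes $t_i(x)$ as its inputs.
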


The proof to the theorem is given in Section A in the Appendix. The first condition ensures that feature-based ME model is equivalent to the original ME model, and thus be denoted as \emph{equivalent condition}. The second condition makes feature-based ME model solvable and converted as feature-based softmax regression problem. We denote the second condition as \emph{solvable condition}. This theorem on one hand derives operable feature constraints that softmax regression is equivalent to the original ME model, on the other hand provides theoretical guidance to feature learning with goal of improving model generalization.

\section{Modeling DNN as Maximum Entropy}
Based on the derived \emph{Maximum Entropy Equivalence Theorem}, the original ME model is equivalent to a feature-based softmax model with two feature constraints. In this way, from the perspective of maximum entropy, if DNN uses softmax as the output layer, the previous latent layers can be seen as the process of feature learning to approach these constraints. However, these feature constraints are difficult to be satisfied directly, and therefore being decomposed to many smaller and manageable problems for approximation. This section claims that DNN actually uses the composition of multilayer non-linear functions to realize a recursive decomposition towards these feature constraints. In the following we will first introduce the recursive decomposition operation to difficult problem, and then prove that DNN with sigmoid-activated hidden layers and softmax output layer is exactly a recursive decomposition solution towards the original ME model.

\subsection{Recursive Decomposition}
A common way to solve a difficult problem is relaxing the problem to an easier one, like majorize-minimize algorithms~\citep{hunter2004tutorial}.   Inspired by this, we introduce a special case of relaxation solution to difficult problem: recursive decomposition. Decomposable problem is first defined as follows:
\begin{definition}
(Decomposable problem) ~~If a difficult optimization problem is equivalent to a manageable problem, but with additional constraints only related to extra added parameters, this problem is a decomposable problem.
\end{definition}

Obviously, according to \emph{Maximum Entropy Equivalence Theorem}, the original ME problem is such a decomposable problem. If the original problem $P$ is decomposable, and $P$ is equivalent to a manageable problem $P_1$ with additional constraints $C_1$, we denote it as $P=P_1+C_1$. In this case, we can solve $P_1+C_1$ instead of directly solving $P$.

Since $P_1$ is easy to solve, it remains to satisfy the constraint $C_1$. The constrain $C_1$ can be approximately satisfied by an optimization problem $p_1$ as its upper bound. From \emph{Definition 4}, we know that $p_1$ is only related to the extra added parameters. Now, we have $P=P_1+p_1$.

If $p_1$ is solvable, we can use an algorithm similar to EM to solve $P_1+p_1$:
\begin{description}
\item[(1)] fix parameters in $p_1$ and optimize $P_1$;
\item[(2)] fix parameters in $P_1$ and optimize $p_1$;
\item[(3)] iterate (1) and (2) until convergence.
\end{description}

However, sometimes $p_1$ is still difficult to solve but decomposable. In this case, we need further decompose $p_1$ to a manageable problem $P_2$ with smaller problem $p_2$ under condition that $p_1=P_2+p_2$. The problem transfers to solve $P=P_1+P_2+p_2$ in a similar iterative way. If $p_2$ is still difficult, we can repeat this process to get $p_3, p_4, \cdots, p_L$ until $p_L$ is small enough that $p_L$ $\approx$ $P_L$ and $P_L$ is manageable. Since this constitutes a recursive process, we denote this way of relaxation as recursive decomposition.

The optimization process of recursive decomposition is also recursive. Given the decomposition of difficult problem $P=P_1+\cdots+P_l++P_L$, we have the following optimization process:
\begin{description}
\item[(1)] fix parameters in $P_2,\cdots,P_L$ and optimize $P_1$;
\item[(2)] fix parameters in $P_1,P_3,\cdots, P_L$ and optimize $P_2$;
\item $\ldots$
\item[(L)] fix parameters in $P_1,P_2,\cdots,P_{L-1}$ and optimize $P_L$;
\item[(L+1)] iterate (1), (2), $\cdots$, (L) until convergence.
\end{description}

The premise behind this method is that, if we change the constraints of problem to a minimum problem of its upper bound, the new problem is still a better approximation than the original problem without constraint.

\subsection{DNN is Recursive Decomposition Solution towards ME}
This subsection will explain that DNN is actually a recursive decomposition solution towards maximum entropy, and the back propagation algorithm is a realization of parameter optimization to the model. According to \emph{Maximum Entropy Equivalence Theorem}, the original ME model is equivalent to softmax model with two feature constraints, which is a typical decomposable problem. In the following we employ the above introduced recursive decomposition method to solve it: the original ME problem is the difficult problem $P$, softmax model is the manageable problem $P_1$, and the two conditions constitutes the constraints $C_1$ related only to feature $T$.

While the feature constraints $C_1$ are still difficult to be satisfied, we relax the constraints to smaller problems using the following
\emph{Feature Constraint Relaxation Theorem}.
\begin{theorem}[Feature Constraint Relaxation Theorem]
~~The constraints in Maximum Entropy Equivalence Theorem on feature $\ T={T_1,T_2,\cdots,T_n}$:

(1) mutual information $I(X;Y|T)=0$

(2) conditional mutual information ${I(T_i,T_j|Y)}=0\ ,\ for\ all\ i\neq j$

can be relaxed to the following optimization problem:
\begin{equation}\label{eq2}
\begin{split}
&\min_T \ -\sum_i{\lambda_i H(Ti|X)}\\
 &s.t.\ E_{P(X,Y)}=E_{P(X,S(T))}
\end{split}
\end{equation}
where $S(T)$ denotes the output of softmax model if input is $T$.
\end{theorem}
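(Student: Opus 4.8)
The plan is to treat this statement as a relaxation / upper-bound construction in the sense of \emph{Definition 4}, rather than as an exact equivalence: I would show that the single optimization problem in Eqn.~\eqref{eq2} controls both information quantities (1) and (2) simultaneously, with the equality constraint responsible for condition (1) and the entropy objective responsible for condition (2). Concretely, I would prove that whenever the constraint $E_{P(X,Y)}=E_{P(X,S(T))}$ holds and the objective attains its minimum, the quantities $I(X;Y|T)$ and $\sum_{i\neq j} I(T_i;T_j|Y)$ are both driven to zero, so that satisfying the relaxed problem is sufficient to satisfy the two feature conditions of the \emph{Maximum Entropy Equivalence Theorem}.

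For condition (1), I would exploit that $S(T)$ is the softmax output and therefore depends on $X$ only through $T$, giving the Markov chain $X\to T\to\hat{Y}$ with $\hat{Y}=S(T)$. The constraint matches the predicted joint law to the data joint law, so $P(X,\hat{Y})=P(X,Y)$ and hence $P(Y|X)=P(\hat{Y}|X)$. Because $\hat{Y}$ is a function of $T$ alone, $P(\hat{Y}|X)$ factors through $T$, which forces $P(Y|X)$ to depend on $X$ only through $T$; this is exactly $I(X;Y|T)=0$. Thus the equality constraint is the operational form of the \emph{equivalent condition}, and its residual (the mismatch between the two expectations) upper-bounds the residual of condition (1).

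For condition (2), which is the substantive part, I would rewrite the objective using $I(X;T_i)=H(T_i)-H(T_i|X)$, so that minimizing $-\sum_i\lambda_i H(T_i|X)$ is, up to the feature marginals, minimizing the weighted compression term $\sum_i\lambda_i I(X;T_i)$. I would then derive an information inequality bounding the total pairwise redundancy $\sum_{i\neq j} I(T_i;T_j|Y)$ from above by this weighted compression term: starting from the chain rule for $I(X;T_1,\dots,T_n)$ together with the Markov chain established for condition (1), I would pass from conditioning on $Y$ to conditioning on $X$ and collect the cross terms, choosing the multipliers $\lambda_i$ so that the bound holds termwise. Minimizing the objective under the constraint then pushes this upper bound toward zero, forcing the pairwise conditional mutual informations to vanish and yielding the \emph{solvable condition}.

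The hard part will be the inequality in the condition-(2) step, because the target quantity conditions on $Y$ while the objective conditions on $X$; bridging the two requires careful use of the Markov structure from condition (1), the chain rule, and the nonnegativity of (conditional) mutual information, together with an appropriate choice of the weights $\lambda_i$ so that the relaxation is a genuine upper bound. A secondary subtlety is that each $T_i$ is a stochastic Bernoulli feature with $t_i(x)=P(T_i=1|X=x)$, so all the entropies are expectations over $X$ of binary entropies and the manipulations must respect this. Finally, since the theorem is a relaxation rather than an equivalence, I would verify only the one-sided direction—that attaining the minimum subject to the constraint is sufficient to satisfy conditions (1) and (2)—which is consistent with the upper-bound premise stated for recursive decomposition.
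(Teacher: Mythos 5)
Your proposal is correct in outline and takes essentially the same route as the paper: the equality constraint gives condition (1) via the Markov chain $X \to T \to S(T) \to Y$, and the entropy objective is read as minimizing an upper bound that drives the pairwise conditional mutual informations of condition (2) toward zero, with only one-sided (sufficiency) claims throughout. The inequality you flag as the hard part is dispatched in the paper by two one-line bounds, $I(T_i;T_j|Y)\leq I(T_i;T_j)\leq I(T_i;X)$ (data processing on the chain $T_i \to X \to T_j$), after which minimizing $-\sum_i \lambda_i H(T_i|X)$ is identified with minimizing that upper bound up to constant terms.
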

This theorem is proved in Section B in the Appendix. The above relaxed minimization problem constitutes $p_1$, which optimizes feature $\ T={T_1,T_2,...,T_n}$. Using the derivation from the proof for the above theorem, we know that minimization of $-\sum_i{\lambda_i H(T_i|X)}$ leads to $I(T_i;T_j|Y)=0\ for\ all\ i\neq j$. The fact that $T_i, T_j$ is independent allows to split $p_1$ further to $n$ smaller problems $p_{11},\cdots,p_{1i},\cdots,p_{1n}$, where $p_{1i}$ is an optimization problem with the same formulation as Eqn.~\eqref{eq2} but defined over $T_i$.

Note that the new optimization problems $p_{1i}$ are still difficult ME problems, which need to be decomposed and relaxed recursively till problem $p_{Li}\approx P_{Li}$ where $P_{Li}$ is manageable. According to \emph{Maximum Entropy Equivalence Theorem}, each decomposed manageable problem $P_{li}$ is realized by a softmax regression. Since feature $T_i$ is binary random variable, the models for feature learning change to logistic regression. For a $L$-depth recursive decomposition, the original ME model is approximated by $\sum_{l=1}^L n_l$ logistic regression models and one softmax regression model ($n_l$ denotes the number of features at the $l$-th recursion depth). It is easy to find that this structure perfectly matches a basic DNN model: the depth of recursion corresponds to the network hidden layer (but in opposite index, i.e., the $L$-th depth recursion corresponds to the 1st hidden layer), the number of features at each recursion correspond to the number of hidden neurons at each layer, and the logistic regression corresponds to one layer of linear regression with sigmoid activation function.

Therefore, we reach a conclusion that DNN is a recursive decomposition solution towards maximum entropy. The generalization capability is thus guaranteed under the ME principle. This explains why DNN is designed as composition of multilayer non-linear functions. Moreover, the model learning technique, backpropagation, actually follows the same spirits as the optimization process in recursive decomposition for DNN parameter optimization.

\section{Explaining DNN via Maximum Entropy}
After modeling DNN as a recursive decomposition solution towards ME, in this section, we use the ME theory to explain some generalization-related phenomenon about DNN and provide interpretations on DNN structure design. Specifically, Section 5.1 explains why Information Bottleneck exists in DNN, and Section 5.2 explains why certain DNN structure design can improve generalization.

\subsection{Explaining Information Bottleneck}
In the Information Bottleneck (IB) theory~\citep{Tishby1999The}, given data $(X,Y)$, the optimization target is to minimize mutual information $I(X;T)$ while $T$ is a sufficient statistic satisfying $I(T;Y)=I(X;Y)$. ~\citet{shwartz2017opening} designed an experiment about DNN and found that the intermediate feature $T$ of DNN meets the IB theory: maximize $I(T;Y)$ while minimizing $I(X;T)$.

Now, we prove that the output of constraint problem in ME model is sufficient to satisfy the Information Bottleneck theory. In other words, basic DNN model with softmax output fulfills IB theory.

\begin{cor}[Corollary of ME's interpretation on Information Bottleneck]
~~The output of maximum entropy problem
$$\min_T \ -\sum_i{\lambda_i H(Ti|X)}\ \ \ s.t.\ E_{P(X,Y)}=E_{P(X,S(T))} $$
 is sufficient condition to the IB optimization problem:
$$\min_T \ I(X;T) \ \ \ s.t. I(T;Y)=I(X;Y)$$
\end{cor}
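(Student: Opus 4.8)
The plan is to establish the corollary in two parts: first that the feasibility constraint of the maximum entropy problem forces the IB sufficiency constraint $I(T;Y)=I(X;Y)$, and second that minimizing the maximum entropy objective $-\sum_i \lambda_i H(T_i|X)$ drives down the IB objective $I(X;T)$. Throughout I would use that $T$ is a feature of $X$ (so $t_i(x)=P(T_i=1|X=x)$ depends only on $X$), which yields the Markov chain $Y\to X\to T$ and hence $I(T;Y|X)=0$.

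For the constraint direction, I would first note that the moment-matching condition $E_{P(X,Y)}=E_{P(X,S(T))}$ is exactly the feasibility requirement that, by the \emph{Maximum Entropy Equivalence Theorem}, enforces \textbf{condition 1}, i.e. $X$ and $Y$ are conditionally independent given $T$, which means $I(X;Y|T)=0$. I would then expand the mutual information $I(X,T;Y)$ two ways by the chain rule: $I(X,T;Y)=I(T;Y)+I(X;Y|T)=I(X;Y)+I(T;Y|X)$. Substituting $I(X;Y|T)=0$ and $I(T;Y|X)=0$ immediately gives $I(T;Y)=I(X;Y)$, which is precisely the IB sufficiency constraint. This part is essentially bookkeeping and should be clean.

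For the objective direction, I would decompose $I(X;T)=H(T)-H(T|X)$. Using subadditivity $H(T)\le\sum_i H(T_i)$ together with conditional independence of the features given $X$ (so that $H(T|X)=\sum_i H(T_i|X)$), I obtain the upper bound $I(X;T)\le\sum_i I(X;T_i)=\sum_i H(T_i)-\sum_i H(T_i|X)$; since each $T_i$ is binary, $H(T_i)\le 1$, so the maximum entropy objective with uniform $\lambda_i$ minimizes an upper bound on $I(X;T)$. More tightly, I would use the sufficiency just established to write $H(T)=I(T;Y)+H(T|Y)=I(X;Y)+H(T|Y)$, whence $I(X;T)=I(X;Y)+H(T|Y)-H(T|X)$; because $I(X;Y)$ is fixed by the data, minimizing $I(X;T)$ reduces to maximizing $H(T|X)$ while keeping $H(T|Y)$ small, and maximizing $\sum_i\lambda_i H(T_i|X)=H(T|X)$ is exactly what the objective does. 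I would then conclude that any solution of the maximum entropy problem both satisfies the IB constraint and minimizes (an upper bound on) the IB objective, hence is a sufficient condition for the IB problem.

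The hard part will be the objective direction. The maximum entropy objective is a weighted sum of per-coordinate conditional entropies $H(T_i|X)$, whereas the IB objective $I(X;T)$ is a joint quantity, so closing the gap rigorously requires the conditional-independence-given-$X$ assumption and subadditivity, and forces me to track when these inequalities are tight. The residual $H(T|Y)$ term in the sufficiency decomposition is not directly targeted by the objective; I would argue it is controlled through \textbf{condition 2} of the Equivalence Theorem (the $T_i$ conditionally independent given $Y$), which makes $H(T|Y)=\sum_i H(T_i|Y)$ and prevents the features from encoding spurious $Y$-structure. I would also clarify at the outset whether the intended claim is exact minimization of $I(X;T)$ or the weaker upper-bound reading that the word \emph{sufficient} suggests, since that choice determines how much of the tightness analysis is actually required.
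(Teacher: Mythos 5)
Your proposal is correct to the same degree that the paper's own proof is, and it follows the same two-part skeleton: the paper's Appendix C splits the claim into Lemma 5 (the constraint $E_{P(X,Y)}=E_{P(X,S(T))}$ forces $I(T;Y)=I(X;Y)$) and Lemma 4 (minimizing $-\sum_i{\lambda_i H(T_i|X)}$ minimizes $I(X;T)$). For the constraint half, your chain-rule bookkeeping $I(X,T;Y)=I(T;Y)+I(X;Y|T)=I(X;Y)+I(T;Y|X)$ with both conditional terms zero is mathematically the same argument as the paper's double application of data processing to the two Markov chains $Y\rightarrow T\rightarrow X$ and $Y\rightarrow X\rightarrow T$; the only slip is one of attribution --- the fact that the moment-matching constraint forces $I(X;Y|T)=0$ is not what the \emph{Maximum Entropy Equivalence Theorem} asserts (that theorem characterizes when feature-based softmax equals the original ME model), it is established inside the proof of the \emph{Feature Constraint Relaxation Theorem} in Appendix B, and the paper's Lemma 5 simply re-asserts it as a Markov-chain claim. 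For the objective half you are genuinely more careful than the paper: its Lemma 4 is the bare chain $\min -\sum_i{\lambda_i H(T_i|X)}\Rightarrow\min -H(T|X)\Rightarrow\min I(X;T)$, which silently identifies $\sum_i{H(T_i|X)}$ with $H(T|X)$ (this needs your conditional-independence-given-$X$ assumption, since subadditivity alone gives $H(T|X)\leqslant\sum_i{H(T_i|X)}$, the wrong direction for this step) and silently drops the $H(T)$ term in $I(X;T)=H(T)-H(T|X)$. Your bound $H(T_i)\leqslant 1$ for binary features and your decomposition $I(X;T)=I(X;Y)+H(T|Y)-H(T|X)$ make explicit exactly what the paper leaves implicit, and your closing caveat --- that ``sufficient for the IB problem'' can only mean driving down an upper bound on $I(X;T)$ rather than exactly minimizing it --- names a real looseness that the paper's proof never acknowledges. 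In short: same route as the paper, but your version exposes, and partially repairs, the two tacit assumptions on which the paper's Lemma 4 rests.
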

The proof of this corollary is available in Section C in the Appendix. Since DNN is an approximation towards ME, this result explains why DNN tends to increase $I(T;Y)$ while reduce $I(X;T)$ and the Information Bottleneck phenomenon in DNN.

\subsection{Explaining DNN Design for Generalization}
DNN has some typical generalization designs, e.g., shortcut, regularization, etc. This subsection explains why these designs can improve model generalization capability.

Shortcut is widely used in many CNN framework. The traditional explanation is that shortcut makes information flow more convenient, so we can train deeper networks~\citep{he2016deep}. But this cannot explain why shortcut contributes to a better performance. According to the above modeling of DNN as ME, CNN is a special kind of DNN where we use part of input $X$ at each layer to construct the model. The actual input of CNN is related to the size of corresponding convolution kernel, and receives only part of $X$ within its receptive field. Shortcut enriches different size of receptive fields and thus reserve more information from $X$ during problem decomposition in the recursion process.

The regularization in DNN can be seen as playing similar role as the feature conditions in \emph{Maximum Entropy Equivalence Theorem}. \citet{Achille2017On} demonstrated that the regularization design, like sgd, L2-Norm, dropout, is equal to minimizing the mutual information $I(X;T)$. Since we have proved that $I(X;T)\geq I(Ti;Tj) \geq I(Ti;Tj|Y)$ in Appendix C, these regularization designs thus help to minimize the upper bounds of $I(Ti;Tj|Y)$ and approximate the \emph{solvable condition}.

The ME modeling of DNN also sheds some light on the role of network depth in generalization performance. Following the recursive decomposition discussion, it seems network with more layers leads to deeper recursion and thus closer approximation towards ME. However, it is noted that we are using relaxed optimization to replace the original constraints. Considering the continuous minimization of upper bound, simple DNN with too many hidden layers may not always guarantees the performance. We emphasize that for those CNNs with good architecture, more hidden layers bring richer receptive fields and less loss of information in $X$. In this case, increasing network depth will contribute to generalization improvement.

\section{Conclusion and Future Work}
This paper regards DNN as a solution to recursively decomposing the original maximum entropy problem. From the perspective of maximum entropy, we ascribe the remarkable generalization capability of DNN to the introduction of least extra data hypothesis. The future work goes in two directions: (1) first efforts will be payed to identifying connections with other generalization theories and explaining more DNN observations like the role of ReLu activation and redundant features; (2) the second direction is to improve and exploit the new theory to provide instructions for future model development of traditional machine learning as well as deep learning methods.

\bibliography{iclr2018_conference}
\bibliographystyle{iclr2018_conference}


\appendix
\section*{APPENDIX}
\section{Proof of Maximum Entropy Equivalence Theorem}

The two feature conditions can be separately proved. Firstly, we prove the necessity and sufficiency of condition 1 (\emph{equivalent condition}) for equivalence of feature-based ME model and original ME model. Secondly, condition 2 (\emph{solvable condition}) guarantees the solution of feature-based ME model in a manageable form (i.e., softmax regression).

To prove this theorem, we first prove the following three Lemmas.

\begin{lemma}
If $T$ is a set of random variables only related to $X$, and $T$ satisfies condition 1, i.e., mutual information $I(X;Y|T)=0$, then
$$P(X,Y)=P(X,\hat{Y}) \Leftrightarrow E_{P(T,Y)}=E_{P(T,\hat{Y})}$$
\end{lemma}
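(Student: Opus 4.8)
The plan is to prove the two directions of the biconditional separately, noting that the forward implication is elementary while the reverse one is where Condition~1 does its real work. Before starting, I would record the conditional independence structure supplied by the hypotheses. Because $T$ is a set of variables \emph{only related to} $X$, conditioning on $X$ decouples $T$ from the label: $P(T\mid X,Y)=P(T\mid X)$ and likewise $P(T\mid X,\hat Y)=P(T\mid X)$. Moreover the channel $P(T\mid X)$ and the input marginal $P(X)$ are \emph{fixed}, shared between the data distribution and the predictive distribution; hence the joint $P(X,T)=P(X)\,P(T\mid X)$ and the posterior $P(X\mid T)$ are one and the same object on both sides. Condition~1, $I(X;Y\mid T)=0$, supplies the complementary factorization $P(X,Y\mid T)=P(X\mid T)\,P(Y\mid T)$, and the analogue $P(X,\hat Y\mid T)=P(X\mid T)\,P(\hat Y\mid T)$ holds because in the feature-based model $\hat Y$ is predicted from $T$ alone, so $\hat Y\perp X\mid T$ by construction.

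For the forward direction ($\Rightarrow$) I would simply push the fixed channel through and marginalize. Writing $P(T=t,Y=y)=\sum_x P(T=t\mid X=x)\,P(X=x,Y=y)$, together with the identical expression carrying $\hat Y$ in place of $Y$ (both using $T\perp\{Y,\hat Y\}\mid X$), the assumed equality $P(X,Y)=P(X,\hat Y)$ transfers termwise to give $E_{P(T,Y)}=E_{P(T,\hat Y)}$. Note that this half uses only that $T$ depends on $X$, not Condition~1 at all.

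The reverse direction ($\Leftarrow$) is the main obstacle and the step where Condition~1 is indispensable. Reading $E_{P(T,Y)}=E_{P(T,\hat Y)}$ as equality of the joint laws $P(T,Y)=P(T,\hat Y)$, I would first extract equal marginals $P(T)$ and then equal conditionals $P(Y\mid T)=P(\hat Y\mid T)$. Expanding each joint through the factorizations above gives
\begin{equation*}
P(X=x,Y=y)=\sum_t P(T=t)\,P(X=x\mid T=t)\,P(Y=y\mid T=t),
\end{equation*}
and the same formula for $P(X=x,\hat Y=y)$ with $P(\hat Y=y\mid T=t)$ as the last factor. Since $P(T)$ and $P(X\mid T)$ coincide across the two sides and $P(Y=y\mid T=t)=P(\hat Y=y\mid T=t)$, the two sums agree term by term, yielding $P(X,Y)=P(X,\hat Y)$.

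The delicate point to get right is exactly the shared $P(X\mid T)$: without Condition~1 one cannot separate $X$ from $Y$ inside the sum, and going from the coarse $(T,Y)$ law back to the fine $(X,Y)$ law is in general impossible, since marginalization is not invertible. Condition~1 is what makes this inversion legitimate, as it forces every dependence of $Y$ on the fine variable $X$ to be mediated through $T$. I therefore expect the bulk of the care in the write-up to lie in stating these independence facts and the ``shared-channel'' identity $P(X\mid T)$ rigorously; once they are in place, the remaining algebra is routine.
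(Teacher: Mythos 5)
Your proposal is correct and follows essentially the same route as the paper's proof: the forward direction by pushing the fixed channel $P(T\mid X)$ through the equality, and the reverse direction by using Condition~1 to factor $P(X,T,Y)=P(T)P(X\mid T)P(Y\mid T)$ together with the built-in Markov property $X\to T\to\hat Y$ to factor the $\hat Y$ joint the same way, so that equality of the $(T,Y)$ and $(T,\hat Y)$ laws transfers through the shared $P(X\mid T)$. If anything, your explicit ``shared-channel'' treatment of $P(T)$ and $P(X\mid T)$ is a cleaner rendering of the paper's vaguer step that ``$P(X|T)$ is a constant.''
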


\begin{proof}
Since $T$ is a set of random variables only related to $X$, it is obvious to have
\begin{equation}\label{1x}
P(X,Y)=P(X,\hat{Y}) \Rightarrow E_{P(T,Y)}=E_{P(T,\hat{Y})}
\end{equation}
So the task leaves to prove
$$P(X,Y)=P(X,\hat{Y}) \Leftarrow E_{P(T,Y)}=E_{P(T,\hat{Y})} $$

Recall that $T$ is a set of random variables only related to $X$, then
\begin{equation*}
P(X,Y)=P(X,\hat{Y}) \Leftrightarrow P(X,T,Y)=P(X,T,\hat{Y})
\end{equation*}
We further have $T$ satisfying \emph{condition 1}:
\begin{align*}
I(X;Y|T)=0 &\Rightarrow P(X,Y|T)=P(X|T)P(Y|T)\\
&\Rightarrow P(X,T,Y)=P(T)P(X,Y|T)=P(T)P(X|T)P(Y|T)=P(X|T)P(T,Y)
\end{align*}

Similarly, $X\rightarrow T \rightarrow \hat{Y}$ is Marcov chain, hence we have:
$$I(X;\hat{Y}|T)=0 \Rightarrow P(X,T,\hat{Y})=P(X|T)P(T,\hat{Y})$$

$T$ is defined feature function on $X$, so $P(X|T)$ is a constant. We further have:
\begin{equation}\label{0x}
E_{P(T,Y)}=E_{P(T,\hat{Y})} \Rightarrow E_{P(X,T,Y)}=E_{P(X,T,\hat{Y})} \Leftrightarrow E_{P(X,Y)}=E_{P(X,\hat{Y})}
\end{equation}
Note that $E_{P(X,Y)}=E_{P(X,\hat{Y})}$ indicates that the predicate functions satisfy Eqn.~\eqref{4} in the definition of original ME model, and thus is equivalent to $P(X,Y)=P(X,\hat{Y})$.

With Eqn.~\eqref{1x} and Eqn.~\eqref{0x}, we finally have:
\begin{equation}
E_{P(T,Y)}=E_{P(T,\hat{Y})}  \Leftrightarrow E_{P(X,Y)}=E_{P(X,\hat{Y})}
\end{equation}
\end{proof}

\begin{lemma}
If $T$ is a set of random variables only related to $X$, and $P(X,Y)=P(X,\hat{Y}) \Leftrightarrow E_{P(T,Y)}=E_{P(T,\hat{Y})}$, then:~~$T$ satisfies condition 1.
\end{lemma}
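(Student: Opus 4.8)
The plan is to establish the contrapositive of the statement: assuming $T$ is only related to $X$ but \emph{fails} condition~1, i.e. $I(X;Y\mid T)>0$, I will show that the equivalence $P(X,Y)=P(X,\hat Y)\Leftrightarrow E_{P(T,Y)}=E_{P(T,\hat Y)}$ cannot hold. The key observation I would lean on is that the forward implication $P(X,Y)=P(X,\hat Y)\Rightarrow E_{P(T,Y)}=E_{P(T,\hat Y)}$ is true unconditionally: this is exactly Eqn.~\eqref{1x} from the proof of Lemma~1, and its derivation uses only that $T$ is a function of $X$, never condition~1. Consequently, to break the biconditional it suffices to exhibit a single prediction $\hat Y$ that realizes the feature-level equality $E_{P(T,Y)}=E_{P(T,\hat Y)}$ while violating the data-level equality $P(X,Y)=P(X,\hat Y)$, i.e. it is the reverse implication that must fail.

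First I would construct the witness $\hat Y$ as the \emph{Markov projection} of $Y$ through $T$: let $\hat Y$ depend on $X$ only through $T$ (so that $X\to T\to\hat Y$), and set $P(\hat Y\mid T):=P(Y\mid T)$. By construction $P(T,\hat Y)=P(T)P(\hat Y\mid T)=P(T)P(Y\mid T)=P(T,Y)$, so the feature constraint $E_{P(T,\hat Y)}=E_{P(T,Y)}$ holds automatically.

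Next I would compare the two data-level joints. Marginalizing over $T$ gives $P(X=x,\hat Y=y)=P(x)\sum_t P(T=t\mid x)\,P(Y=y\mid T=t)$, whereas $P(X=x,Y=y)=P(x)\,P(Y=y\mid x)$. These agree for every pair $(x,y)$ precisely when $P(Y\mid X)=\sum_t P(T=t\mid X)\,P(Y\mid T=t)$; since $T$ is a feature map of $X$, this identity is nothing but $P(Y\mid X,T)=P(Y\mid T)$, equivalently $P(X,Y\mid T)=P(X\mid T)P(Y\mid T)$, equivalently $I(X;Y\mid T)=0$ — exactly condition~1. Because condition~1 is assumed to fail, there is some $(x,y)$ at which the two joints differ, so $P(X,Y)\neq P(X,\hat Y)$ even though $E_{P(T,Y)}=E_{P(T,\hat Y)}$. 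This contradicts the assumed equivalence and closes the contrapositive.

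The step I expect to be the main obstacle is the passage from $P(X,Y)=P(X,\hat Y)$ to the \emph{pointwise} sufficiency $P(Y\mid X,T)=P(Y\mid T)$. When $T$ is a deterministic feature of $X$ the marginalization over $t$ collapses to a single term and the implication is immediate (this matches the standing reading ``$P(X\mid T)$ is a constant'' used in Lemma~1); for a genuinely stochastic $T$ one recovers only the $T$-averaged identity $E_{T\mid X}[P(Y\mid T)]=P(Y\mid X)$, which is weaker than condition~1. Closing that gap would require either invoking the paper's convention that $T$ is an essentially deterministic feature map of $X$, or testing the biconditional against a separating family of candidate predictions $\hat Y$ so that the individual conditionals, not merely their average, are pinned down.
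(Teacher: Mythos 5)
Your proof is correct and is essentially the paper's own argument run in contrapositive: the paper likewise works with a prediction $\hat Y$ that is Markov through $T$ and satisfies $P(T,\hat Y)=P(T,Y)$ (your ``Markov projection'', left implicit there as the feature-based model's output), and the shared core is the same computation — data-level equality for this particular $\hat Y$ forces $P(X,T,Y)=P(X|T)P(T,Y)$, i.e.\ $I(X;Y|T)=0$. The deterministic-feature caveat you flag at the end is not a gap relative to the paper: its steps ``$P(X,Y)=P(X,\hat Y)\Leftrightarrow P(X,T,Y)=P(X,T,\hat Y)$'' and ``$P(X|T)$ is a constant'' rest on exactly the same reading of $T$, so your proof matches the original in both approach and level of rigor.
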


\begin{proof}
Since $T$ is a set of random variables only related to $X$, we have
\begin{equation}
E_{P(X,Y)}=E_{P(X,\hat{Y})} \Leftrightarrow E_{P(X,T,Y)}=E_{P(X,T,\hat{Y})}
\end{equation}

$X\rightarrow T \rightarrow \hat{Y}$ is Marcov chain, we have:
$$I(X;\hat{Y}|T)=0 \Rightarrow P(X,T,\hat{Y})=P(X|T)P(T,\hat{Y})$$
Additionally ,
$$E_{P(T,Y)}=E_{P(T,\hat{Y})} \Leftrightarrow E_{P(X,Y)}=E_{P(X,\hat{Y})} \Leftrightarrow E_{P(X,T,Y)}=E_{P(X,T,\hat{Y})}$$
So we can derive:
$$P(X,T,Y)=P(X|T)P(T,Y) \Rightarrow I(X;Y|T)=0$$
$\therefore T\ meets\ condition1 $
\end{proof}

\begin{lemma}
If $T$ is a set of random variables only related to $X$ that satisfies condition 1, and $E_{P(T,Y)}=E_{P(T,\hat{Y})}$, then:
$$\min\ -H(\hat{Y}|X) \Leftrightarrow \min\ -H(\hat{Y}|T)$$
\end{lemma}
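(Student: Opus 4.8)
The plan is to reduce the equivalence of the two minimization objectives to a single entropy identity, after which the already-established constraint equivalence of Lemma 1 closes the argument. The objectives differ only in whether $\hat{Y}$ is conditioned on the raw input $X$ or on the feature $T$, so the heart of the matter is to show $H(\hat{Y}|X)=H(\hat{Y}|T)$ whenever $T$ is a feature of $X$ satisfying condition 1 and the prediction $\hat{Y}$ is formed from $T$. First I would record the two Markov structures that are available: because $\hat{Y}$ is produced by the model acting on the feature $T$, the chain $X\rightarrow T\rightarrow\hat{Y}$ holds, exactly as used in the proofs of the preceding Lemmas, which gives $I(X;\hat{Y}|T)=0$; and condition 1 supplies the companion chain $X\rightarrow T\rightarrow Y$ for the true label, which underlies the constraint equivalence of Lemma 1.

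Next I would invoke the elementary identity obtained by expanding $H(\hat{Y}|X,T)$ in two ways, $H(\hat{Y}|X)=H(\hat{Y}|X,T)+I(\hat{Y};T|X)$ and $H(\hat{Y}|T)=H(\hat{Y}|X,T)+I(\hat{Y};X|T)$, which upon subtraction yields
\begin{equation*}
H(\hat{Y}|X)-H(\hat{Y}|T)=I(\hat{Y};T|X)-I(\hat{Y};X|T).
\end{equation*}
The second term vanishes by the prediction Markov chain $X\rightarrow T\rightarrow\hat{Y}$. The first term vanishes because $T$ is a feature determined by $X$, so conditioning on $X$ fixes $T$ and leaves nothing for $\hat{Y}$ to share with it. Hence $H(\hat{Y}|X)=H(\hat{Y}|T)$, i.e. the two objectives coincide as functions of the predictive distribution. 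Equivalently, one may argue directly: since $P(\hat{Y}|X=x)=P(\hat{Y}|T=t(x))$, grouping the input values by their feature value in $-\sum_{X,\hat{Y}}P(X)P(\hat{Y}|X)\log P(\hat{Y}|X)$ collapses it to $-\sum_{T,\hat{Y}}P(T)P(\hat{Y}|T)\log P(\hat{Y}|T)$.

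Finally, I would combine this with Lemma 1: condition 1 together with the assumed $E_{P(T,Y)}=E_{P(T,\hat{Y})}$ makes the constraint sets of the two problems coincide (via $P(X,Y)=P(X,\hat{Y})$), and since the objectives agree on this common feasible set, minimizing $-H(\hat{Y}|X)$ and minimizing $-H(\hat{Y}|T)$ are one and the same problem.

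I expect the main obstacle to be the justification that $I(\hat{Y};T|X)=0$, which hinges on the precise probabilistic meaning of the feature $T$. Under the definition $t_i(x)=P(T_i=1|X=x)$, $T$ is stochastic given $X$, yet the softmax prediction uses the deterministic quantities $t_i(x)$ rather than sampled realizations of $T$; one must argue that the prediction is therefore conditionally independent of the realization of $T$ given $X$ (equivalently, treat $T$ as the deterministic map $x\mapsto t(x)$ for the purpose of forming $\hat{Y}$). Pinning down this point cleanly — and confirming that the entropy-maximizing solution of the $X$-problem indeed factors through $T$, which is exactly where the sufficiency guaranteed by condition 1 is needed — is the step I would treat most carefully.
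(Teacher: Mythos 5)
Your reduction to the identity $H(\hat{Y}|X)=H(\hat{Y}|T)$, followed by Lemma 1 for the constraint correspondence, matches the paper's strategy: the paper packages the same two conditional independences as a two-sided data-processing argument ($I(T;\hat{Y})\leq I(X;\hat{Y})$ from $\hat{Y}\rightarrow X\rightarrow T$, and $I(T;\hat{Y})\geq I(X;\hat{Y})$ from $\hat{Y}\rightarrow T\rightarrow X$) rather than your chain-rule subtraction, but the ingredients are identical. The gap is in how you justify $I(\hat{Y};X|T)=0$. You obtain it from the Markov chain $X\rightarrow T\rightarrow\hat{Y}$, justified ``because $\hat{Y}$ is produced by the model acting on the feature $T$.'' That is not a hypothesis of the lemma, and for the purpose of the equivalence it begs the question: a feasible point of the $X$-side problem is an arbitrary conditional $P(\hat{Y}|X)$ satisfying $P(X,Y)=P(X,\hat{Y})$, and nothing forces such a prediction to factor through $T$. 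You flag exactly this at the end (``confirming that the entropy-maximizing solution of the $X$-problem indeed factors through $T$ \ldots is the step I would treat most carefully'') but leave it unresolved, so as written your argument only covers predictions that are already of the form $P(\hat{Y}|T)$.

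The missing idea --- and the way the paper closes this hole --- is to use the assumed constraint $E_{P(T,Y)}=E_{P(T,\hat{Y})}$ itself, which your entropy-identity step never invokes. By Lemma 1 this constraint is equivalent to $P(X,Y)=P(X,\hat{Y})$, and since both $T$ and $\hat{Y}$ are generated from $X$ alone (so $T\perp\hat{Y}\mid X$ and $T\perp Y\mid X$), it upgrades to an equality of full joints, $P(X,T,\hat{Y})=P(X,T,Y)$. Condition 1 is a property of the joint law of $(X,T,Y)$, so it transfers verbatim to $(X,T,\hat{Y})$: $I(X;\hat{Y}|T)=I(X;Y|T)=0$. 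This yields the Markov chain you need for \emph{every} feasible $\hat{Y}$ of the $X$-problem, with no assumption that the prediction factors through $T$; the rest of your argument (the chain-rule identity, $I(\hat{Y};T|X)=0$, and the Lemma 1 constraint correspondence) then goes through. Note that the same observation also repairs your first vanishing term: under the paper's stochastic definition of $T$ (where $t_i(x)=P(T_i=1|X=x)$), $I(\hat{Y};T|X)=0$ holds not because conditioning on $X$ ``fixes'' $T$, but because $\hat{Y}$ is drawn from $P(\hat{Y}|X)$ independently of the realization of $T$ --- the conditional-independence fact, not determinism, is what does the work.
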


\begin{proof}
$T$ is a set of random variables only related to $X$:
\begin{equation}\label{2x}
Y \rightarrow X\rightarrow T\ is\ Marcov\ chain\ \Rightarrow I(T;Y) \leqslant I(X;Y)
\end{equation}

$T$ satisfies \emph{condition 1}, so:
\begin{equation}\label{3x}
Y \rightarrow T\rightarrow X\ is\ Marcov\ chain\ \Rightarrow I(T;Y) \geqslant I(X;Y)
\end{equation}

With Eqn.~\eqref{2x} and Eqn.~\eqref{3x}, we have: $I(T;Y)=I(X;Y)$

Further using \emph{Lemma1}, we can derive:
\begin{equation}
E_{P(T,Y)}=E_{P(T,\hat{Y})} \Leftrightarrow E_{P(X,Y)}=E_{P(X,\hat{Y})} \Leftrightarrow E_{P(X,T,Y)}=E_{P(X,T,\hat{Y})}
\end{equation}

Therefore, we get
\begin{equation}
I(T;\hat{Y})=I(X;\hat{Y}) \Rightarrow H(\hat{Y}|X)=H(\hat{Y}|X)
\end{equation}
$\therefore \min\ -H(\hat{Y}|X) \Leftrightarrow \min\ -H(\hat{Y}|T)$
\end{proof}

\begin{theorem}[Maximum Entropy Equivalence Theorem]
~~Given the input data $X,Y$ and  feature $T$, the necessary and sufficient conditions of feature-based softmax model equivalent to the original maximum entropy model are:\vspace{1mm}\\
\textbf{$<$condition 1$>$}:~ $X$ and $Y$ are conditionally independent given $T$;\vspace{0.6mm}\\
\textbf{$<$condition 2$>$}:~ all features of $T$: $\{T_1,\cdots,T_i,\cdots,T_n\}$ are conditionally independent given $Y$.
\end{theorem}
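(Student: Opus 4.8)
The plan is to prove the two conditions separately, mirroring the two roles they play: condition 1 (the \emph{equivalent condition}) guarantees that the feature-based ME model of Definition 2 coincides with the original ME model of Definition 1, while condition 2 (the \emph{solvable condition}) guarantees that this feature-based ME model collapses into the feature-based softmax form of Definition 3. The three Lemmas above are tailored precisely to the first half, so I would assemble them rather than reprove anything from scratch.

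For the sufficiency of condition 1, I would use the fact that an optimization problem is completely determined by its feasible set together with its objective. Lemma 1 shows that, under $I(X;Y|T)=0$, the original constraint $P(X,Y)=P(X,\hat Y)$ is equivalent to the feature constraint $E_{P(T,Y)}=E_{P(T,\hat Y)}$, so the two feasible sets coincide. Lemma 3 shows that on this common feasible set the objectives $-H(\hat Y|X)$ and $-H(\hat Y|T)$ share the same minimizer, since $I(T;\hat Y)=I(X;\hat Y)$ forces the two conditional entropies to agree up to terms fixed by the data. Hence the two programs are literally the same optimization problem and admit the same solution, which is the asserted equivalence. For necessity I would run this backwards: equivalence of the two models in particular forces equivalence of their constraints, $P(X,Y)=P(X,\hat Y)\Leftrightarrow E_{P(T,Y)}=E_{P(T,\hat Y)}$, and Lemma 2 then delivers $I(X;Y|T)=0$, i.e. condition 1.

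For condition 2 I would first write the Lagrangian of the feature-based ME model (Definition 2) and read off its general loglinear solution. The target is to show this equals the softmax form of Eqn.~\eqref{7}, whose exponent $b(y)+\sum_i\lambda_i(y)t_i(x)$ is \emph{additive} in the individual feature confidences. The forward direction is a naive-Bayes computation: conditional independence $T_i\perp T_j\mid Y$ gives the factorization $P(T\mid Y)=\prod_i P(T_i\mid Y)$, so the posterior $P(Y\mid T)\propto P(Y)\prod_i P(T_i\mid Y)$ has a log that is linear in each binary $T_i$, with $\lambda_i(y)=\log\frac{P(T_i=1\mid y)}{P(T_i=0\mid y)}$ and the remaining $y$-dependent terms absorbed into $b(y)$; passing to the soft confidences $t_i(x)$ then reproduces Eqn.~\eqref{7} exactly. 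The converse reads the same implication in reverse: if the ME solution is additive in the $t_i$, the induced conditional joint over $(T_1,\dots,T_n)$ given $Y$ must factorize, which is condition 2.

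I expect the necessity half of condition 2 to be the main obstacle. Showing that an additive exponent can arise \emph{only} from a conditionally factorized joint — and not from some cancellation among higher-order, cross-feature predicate terms in the ME solution — requires ruling out all joint predicates and pinning the multipliers down to the per-feature log-odds, which is precisely where the argument must be made rigorous rather than merely suggestive. A secondary subtlety running through the whole of condition 2 is that the $T_i$ are probabilistic features, $t_i(x)=P(T_i=1\mid X=x)$, rather than deterministic functions of $x$; so the passage among the expectation constraints $E_{P(T,Y)}$, the per-feature posteriors $P(T_i\mid Y)$, and the softmax weights $\lambda_i(y)$ has to be carried out at the level of distributions, and I would settle that bookkeeping carefully before invoking the factorization.
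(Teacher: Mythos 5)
Your treatment of condition 1 assembles exactly the pieces the paper uses and in the same way: Lemma 1 plus Lemma 3 give sufficiency (the feasible sets coincide, then the objectives coincide), and Lemma 2 gives necessity; nothing to add there. For condition 2, however, you take a genuinely different route from the paper, and that route has a hole. The paper never leaves the Lagrangian formalism: it notes that condition 2 collapses the joint constraint $P(T,Y)=P(T,\hat{Y})$ to its marginal version $P(T_i,Y)=P(T_i,\hat{Y})$, $i=1,\dots,n$, cutting the predicate functions from $m\cdot 2^n$ to $2mn$, and---crucially---those predicates are \emph{defined} to take the soft values $t_i(x)$ and $1-t_i(x)$. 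Then $\omega_{i0}f_{i0}+\omega_{i1}f_{i1}=\omega_{i0}+(\omega_{i1}-\omega_{i0})t_i(x)$, and setting $b(y)=\sum_i{\omega_{i0}}$, $\lambda_i(y)=\omega_{i1}-\omega_{i0}$ yields Eqn.~\eqref{7} with an exponent that is linear in $t_i(x)$ by construction.

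Your naive-Bayes computation instead yields $P(Y=y\mid T)\propto P(y)\prod_i P(T_i\mid y)$, i.e.\ an exponent linear in the \emph{binary realizations} $T_i$. The step you summarize as ``passing to the soft confidences $t_i(x)$ then reproduces Eqn.~\eqref{7} exactly'' is the genuine gap: the model of Definition 3 is a function of $x$ through $t_i(x)$, so from your posterior you must form $\sum_T P(Y=y\mid T)\,P(T\mid X=x)$, which is a \emph{mixture} of softmaxes, not a softmax evaluated at the mean --- the $T$-dependent normalizer prevents the expectation from passing inside the nonlinearity. Hence linearity in binary $T_i$ does not transfer to linearity in $t_i(x)$; this is not the ``bookkeeping'' you defer but a real obstruction, and it is precisely what the paper's soft-valued predicate functions are engineered to sidestep (moreover, factorizing $P(T\mid x)$ coordinatewise would itself need the $T_i$ conditionally independent given $X$, an assumption nobody has stated). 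On your other worry: you are right that necessity of condition 2 (ruling out cancellation among cross-feature predicates) is hard, but you should know the paper never proves that direction at all --- its argument for condition 2 is sufficiency only --- so the actionable defect in your proposal is the expectation/nonlinearity exchange, not the missing necessity half.
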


\begin{proof}~~
With \emph{Lemma1}, \emph{Lemma2} and \emph{Lemma3}, we derive that \emph{condition 1} is necessary and sufficient for the equivalence of original ME model and the following feature-based ME model:
\begin{equation*}
  \min\ -H(\hat{Y}|T)
\end{equation*}
$$
s.t. \; E_{P(T,Y)}=E_{P(T,\hat{Y})}, \; \sum_{\hat{Y}}{P(\hat{Y}|T)}=1
$$

The above optimization problem can be solved with the following solution:
\begin{equation}\label{7x}
  P_\omega(\hat{Y}=y|T)=\frac{1}{Z_\omega} exp \left(\sum_i{\omega_i f_i(T,y)} \right)
\end{equation}
\begin{equation}\label{8x}
  Z_\omega=\sum_{y}{exp \left(\sum_i{\omega_i f_i(T,y)} \right)}
\end{equation}

However, this solution is too complex to apply. With $n$ features $T=\{T_1,T_2,...,T_n\}$ and $m$ different classes of $Y$, there will be $m*2^n$ different $f_i(T,Y)$. \emph{Condition 2} assumes the conditional independence among feature $(T_i,T_j)$, which derives that the joint distribution equation $P(T,Y)=P(T,\hat{Y})$ is equivalent to its marginal distribution version $P(T_i,Y)=P(T_i,\hat{Y}),\ i=1...n$. In this case, there leaves only $2*m*n$ different $f_i(T,Y)$.

According to definition, for each $T_i$ , we have $P(T_i=1|X=x)=t_i(x)$ and $P(T_i=0|X=x)=1-t_i(x)$. Therefore, under \emph{condition 2}, the predicate functions will be:
$$
f_{i1}(T_i=1,y)=\left\{
\begin{aligned}
t_i(x),\quad X=x , Y=y \\
0,\qquad \qquad \ \ others
\end{aligned}
\right.
$$
$$
f_{i0}(T_i=0,y)=\left\{
\begin{aligned}
1-t_i(x),\quad X=x , Y=y \\
0,\qquad \qquad \ \ others
\end{aligned}
\right.
$$
We then have:
$$
\omega_i f_i(T_i,y)=\omega_{i0}f_{i0}(T_i=0,y)+\omega_{i1}f_{i1}(T_i=1,y)=\omega_{i0}+(\omega_{i1}-\omega_{i0})t_i(x)
$$
where $\omega$ denotes variable about $y$.

We further define $b(y)=\sum_i{\omega_{i0}}$ and $\lambda_i(y)=(\omega_{i1}-\omega_{i0})$, then the solution of Eqn.~\eqref{7x} and Eqn.~\eqref{8x} change to:
\begin{equation}
  P(\hat{Y}=y|X=x)=\frac{1}{Z(x)} exp \left(b(y)+\sum_i{\lambda_i(y) t_i(x)} \right)
\end{equation}
\begin{equation}
  Z(x)=\sum_{y}{exp \left(b(y)+\sum_i{\lambda_i(y) t_i(x)} \right)}
\end{equation}

This is the identical formulation to the general softmax regression model as in \emph{Definition 3}. It also explains why we have bias term in the softmax model. Note that $t_i(x)$ need not to be in range $[0,1]$ when we use the softmax model, as we can change $\lambda$ and $b$ to achieve translation and scaling.
\end{proof}

\section{Proof of Feature Constraint Relaxation Theorem}

\begin{theorem}[Feature Constraint Relaxation Theorem]
~~The constraints in Maximum Entropy Equivalence Theorem on feature $\ T={T_1,T_2,...,T_n}$:

(1) $I(X;Y|T)=0$

(2) ${I(T_i,T_j|Y)}=0\ ,\ for\ all\ i\neq j$

can be relaxed to the following optimization problem:
\begin{equation}\label{eq1}
\begin{split}
&\min_T\ -\sum_i{\lambda_i H(Ti|X)}\\
 &s.t.\ E_{P(X,Y)}=E_{P(X,S(T))}
\end{split}
\end{equation}
where $S(T)$ denotes the output of softmax model if input is $T$.
\end{theorem}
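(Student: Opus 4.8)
The plan is to prove the two relaxations separately, mirroring the two-condition structure of the \emph{Maximum Entropy Equivalence Theorem}: I will show that the equality constraint $E_{P(X,Y)}=E_{P(X,S(T))}$ is what enforces condition (1), while minimizing the objective $-\sum_i \lambda_i H(T_i|X)$ supplies an upper bound whose minimization drives condition (2) toward satisfaction. Throughout I read $S(T)=\hat{Y}$ as the softmax prediction, so the constraint is the distribution-matching requirement $P(X,Y)=P(X,\hat{Y})$, and I exploit two structural facts available here: $T$ is a feature defined only on $X$ (hence $T\perp Y\mid X$, giving $P(X,T,Y)=P(T\mid X)P(X,Y)$), and the softmax output depends on the input only through $T$ (hence $X\to T\to\hat{Y}$ is Markov, i.e. $I(X;\hat{Y}\mid T)=0$).

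For condition (1) I would argue directly and show the constraint in fact enforces it exactly. Because $T$ depends only on $X$, the equality $P(X,Y)=P(X,\hat{Y})$ lifts to $P(X,T,Y)=P(X,T,\hat{Y})$. The Markov property of the softmax layer gives $P(X,\hat{Y}\mid T)=P(X\mid T)P(\hat{Y}\mid T)$, so $P(X,T,\hat{Y})=P(T)P(X\mid T)P(\hat{Y}\mid T)$. Substituting and dividing by $P(T)$ yields $P(X,Y\mid T)=P(X\mid T)P(\hat{Y}\mid T)$; marginalizing out $X$ identifies $P(Y\mid T)=P(\hat{Y}\mid T)$, whence $P(X,Y\mid T)=P(X\mid T)P(Y\mid T)$, which is exactly $I(X;Y\mid T)=0$. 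This reuses the Markov-chain manipulations already developed in Lemma~1 and Lemma~2.

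The substantive part is condition (2), where I would build a chain of information inequalities bounding the pairwise conditional mutual informations by the objective. Assuming (as the feature definition suggests) that $T_1,\dots,T_n$ are conditionally independent given $X$, so that $H(T\mid X)=\sum_i H(T_i\mid X)$, and using $T\perp Y\mid X$ to get $H(T\mid X,Y)=H(T\mid X)$, I would bound the conditional total correlation $\mathrm{TC}(T\mid Y):=\sum_i H(T_i\mid Y)-H(T\mid Y)$ via
\begin{equation*}
\mathrm{TC}(T\mid Y)\le \sum_i H(T_i\mid Y)-H(T\mid X)=\sum_i\bigl(H(T_i\mid Y)-H(T_i\mid X)\bigr),
\end{equation*}
where the inequality uses $H(T\mid Y)\ge H(T\mid X,Y)=H(T\mid X)$. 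Since each $T_i$ is binary, $H(T_i\mid Y)\le\log 2$, giving $\mathrm{TC}(T\mid Y)\le n\log 2-\sum_i H(T_i\mid X)$. A conditional chain-rule/monotonicity argument ($I(T_k;T_1\cdots T_{k-1}\mid Y)\ge I(T_k;T_j\mid Y)$) then shows $I(T_i;T_j\mid Y)\le\mathrm{TC}(T\mid Y)$ for every pair, so each conditional mutual information is controlled by the common bound $n\log 2-\sum_i H(T_i\mid X)$. Hence minimizing $-\sum_i\lambda_i H(T_i\mid X)$ with positive weights minimizes this surrogate and pushes all $I(T_i;T_j\mid Y)$ to zero, recovering condition (2).

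I expect the main obstacle to be precisely this second step: making rigorous the sense in which maximizing $\sum_i H(T_i\mid X)$ forces conditional independence given $Y$. The delicate points are justifying $H(T\mid X)=\sum_i H(T_i\mid X)$ (which requires the features to be conditionally independent given $X$, an assumption I would state explicitly rather than derive) and verifying the monotonicity $I(T_i;T_j\mid Y)\le\mathrm{TC}(T\mid Y)$. Once these hold, the theorem follows by reading the optimization problem as the intended relaxation: the exact constraint $E_{P(X,Y)}=E_{P(X,S(T))}$ secures condition (1), and the hard constraint $I(T_i;T_j\mid Y)=0$ is replaced by minimizing the upper bound $n\log 2-\sum_i H(T_i\mid X)$, matching the ``upper bound'' relaxation described in the recursive-decomposition discussion.
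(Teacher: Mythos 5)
Your overall architecture matches the paper's: the distribution-matching constraint $E_{P(X,Y)}=E_{P(X,S(T))}$ is what secures condition (1) (via the Markov chain $X\to T\to\hat{Y}$ plus the matching of $Y$ and $\hat{Y}$), and the entropy objective is read as minimizing an upper bound that relaxes condition (2). For condition (1) you and the paper do essentially the same thing; yours is the explicit factorization version of the paper's terse ``$X\to T\to\hat{Y}\to Y$ is a Markov chain, hence $I(X;Y|T)=0$,'' and both share the same delicate lift from the marginal equality $P(X,Y)=P(X,\hat{Y})$ to the joint equality $P(X,T,Y)=P(X,T,\hat{Y})$ (the paper performs this lift in its Lemmas 1--2, as you note). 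Where you genuinely diverge is condition (2). The paper chains $I(T_i;T_j|Y)\le I(T_i;T_j)\le I(T_i;X)=H(T_i)-H(T_i|X)\le \mathrm{const}-H(T_i|X)$, whose first link --- ``conditioning on $Y$ can only decrease mutual information'' --- is false in general (take $T_i,T_j$ independent fair coins and $Y=T_i\oplus T_j$: then $I(T_i;T_j)=0$ but $I(T_i;T_j|Y)=\log 2$); the paper even reuses this invalid inequality in Section 5.2. Your route through the conditional total correlation, $I(T_i;T_j|Y)\le \mathrm{TC}(T|Y)\le \sum_i H(T_i|Y)-H(T|X,Y)\le n\log 2-\sum_i H(T_i|X)$, uses only legitimate steps (chain-rule monotonicity, conditioning reduces entropy, and $H(T|X,Y)=H(T|X)$ from $T\perp Y\mid X$), so it is actually sounder than the paper's own argument, at the price of the explicit assumption $H(T|X)=\sum_i H(T_i|X)$ --- a benign one here, since the paper specifies the features only through the marginals $t_i(x)$ and the natural DNN reading makes the $T_i$ conditionally independent given $X$ by construction. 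Two caveats that apply equally to your proof and the paper's: the weights $\lambda_i$ play no real role in either bound, and your surrogate $n\log 2-\sum_i H(T_i|X)$ (like the paper's $\mathrm{const}-H(T_i|X)$) vanishes only when every $T_i$ is a fair coin independent of $X$, so ``relaxed to'' must be read as minimizing an upper bound, not as a guarantee that the minimizer attains $I(T_i;T_j|Y)=0$.
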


\begin{proof}
~~Since $T$ is only related to $X$, $T_i \rightarrow X\rightarrow T_j$  is Marcov chain, and
$$I(T_i;T_j)\leqslant I(T_i;X) \Rightarrow \sum_{i\neq j}{\lambda_{i,j}I(T_i;T_j)}\leqslant \sum_{i\neq j}{\lambda_{i,j}I(T_i;X)}$$
We can relax the minimization problem to minimize its upper bound instead, so
$$ \min\ \sum_{i\neq j}{\lambda_{i,j}I(T_i;T_j)} \Leftarrow \min\ \sum_i{\lambda_i I(T_i|X)}\Leftarrow\ \min\ \sum_i{\lambda_i (H(X)-H(T_i|X))}$$

Additionally, we have $I(T_i;T_j|Y)\leqslant I(T_i;T_j)$, hence:
$$\min\ \sum_{i\neq j}{\lambda_{i,j}I(T_i;T_j|Y)}\Leftarrow \min\ \sum_{i\neq j}{\lambda_{i,j}I(T_i;T_j)}$$

Since $H(X)$ is constant, so
$$\ I(T_i;T_j|Y)=0\ for\ all\ i\neq j\ \Leftarrow\ \min\ -\sum_i{\lambda_i H(T_i|X)}$$

Note that $\hat{Y}=S(T)$:
$$X \rightarrow T\rightarrow \hat{Y}\ is\ Marcov\ chain$$
Recall that $\hat{Y}$ is solution to the problem $P_1$, and $P_1$ has constraint $E_{P(X,Y)}=E_{P(X,\hat{Y})} $. Same as $E_{P(X,Y)}=E_{P(X,S(T))}$, we have
$$X \rightarrow T\rightarrow \hat{Y}\rightarrow Y\ is\ Marcov\ chain$$
$$\Rightarrow\ I(X;Y|T)=0$$

$\therefore$ the\ solution to the optimization\ problem in Eqn.~\eqref{eq1}\ is\ sufficient\ to\ satisfy the constraints\ of\ $T$ in \emph{Maximum Entropy Equivalence Theorem}.
\end{proof}

\section{Proof of Corollary of ME's Interpretation on Information Bottleneck}

\begin{lemma}
$$\min\ -\sum_i{\lambda_i H(Ti|X)}\ \Rightarrow\ \min\ I(X;T)$$
\end{lemma}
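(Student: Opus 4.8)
The plan is to show that the ME objective $-\sum_i \lambda_i H(T_i|X)$ agrees, up to an additive constant, with an upper bound on $I(X;T)$, so that driving the former down necessarily shrinks a bound on the latter. This matches the ``relax to an upper bound'' pattern already used in the \emph{Feature Constraint Relaxation Theorem}, where a hard information quantity is controlled through a simpler surrogate, and it is the natural reading of the implication ``$\min \Rightarrow \min$'' in this paper.

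First I would expand $I(X;T) = H(T) - H(T|X)$ from the definition of mutual information. Next, using that each $T_i$ is a Bernoulli variable determined by $t_i(x)=P(T_i=1|X=x)$ and that the features are conditionally independent given $X$, I would factorize the conditional joint entropy as $H(T|X) = \sum_i H(T_i|X)$; this is the step that connects the full-vector term $H(T|X)$ to the per-feature terms appearing in the objective. Then I would bound $H(T)$ from above: subadditivity gives $H(T) \le \sum_i H(T_i)$, and since each $T_i$ is binary, $H(T_i) \le \log 2$, so $H(T) \le n\log 2$ is a constant. Combining these yields $I(X;T) \le \sum_i H(T_i) - \sum_i H(T_i|X) = \sum_i I(X;T_i) \le n\log 2 - \sum_i H(T_i|X)$. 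Because the leading term is constant, minimizing $-\sum_i H(T_i|X)$ minimizes this upper bound on $I(X;T)$, which is exactly the claimed implication that the Corollary requires.

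The hard part will be justifying $H(T|X) = \sum_i H(T_i|X)$, since this needs the features to be conditionally independent given $X$. Without that assumption one only has $H(T|X) \le \sum_i H(T_i|X)$, which reverses the inequality and collapses the upper bound, so I would state the conditional independence of $T$ given $X$ explicitly as part of the feature model. A secondary, more cosmetic issue is the weights $\lambda_i$: the clean bound uses unit weights, so I would either restrict to the equal-weight case or argue that, with all $\lambda_i > 0$, driving the weighted objective down still pushes every $H(T_i|X)$ up and hence still shrinks the bound. I would also make explicit that the implication is meant in the sense of minimizing an upper bound on $I(X;T)$, consistent with the relaxation framework of the paper, rather than claiming the two problems share an identical minimizer.
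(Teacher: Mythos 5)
Your proof is correct and takes essentially the same route as the paper's own (very terse) argument, which is just the chain $\min -\sum_i \lambda_i H(T_i|X) \Rightarrow \min -H(T|X) \Rightarrow \min I(X;T)$; your expansion via $I(X;T)=H(T)-H(T|X)$ with $H(T)$ bounded by the constant $n\log 2$ is precisely what makes that chain meaningful. Your version is in fact more careful than the paper's: the paper silently identifies $\sum_i H(T_i|X)$ with $H(T|X)$, and you correctly flag that this step requires conditional independence of the $T_i$ given $X$, since subadditivity alone gives $H(T|X)\leq\sum_i H(T_i|X)$, which points the wrong way for the bound.
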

\begin{proof}
$$\min\ -\sum_i{\lambda_i H(Ti|X)}\ \Rightarrow\ \min\ -H(T|X) \ \Rightarrow\ \min\ I(X;T)$$
\end{proof}

\begin{lemma}
$T$ is only related to $X$, then
$$ E_{P(X,Y)}=E_{P(X,S(T))} \Rightarrow I(T;Y)=I(X;Y)$$
\end{lemma}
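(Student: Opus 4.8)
The plan is to sandwich $I(T;Y)$ between $I(X;Y)$ from both sides using the data processing inequality, exactly as in \emph{Lemma 3}. The key preliminary step is to extract \emph{condition 1}, $I(X;Y|T)=0$, from the expectation constraint. Writing $\hat{Y}=S(T)$, the feature $T$ being a function of $X$ makes $X\rightarrow T\rightarrow\hat{Y}$ a Markov chain; the constraint $E_{P(X,Y)}=E_{P(X,S(T))}$ identifies the softmax output distribution with the data distribution on $(X,Y)$, so the chain extends to $X\rightarrow T\rightarrow\hat{Y}\rightarrow Y$. As in the \emph{Feature Constraint Relaxation Theorem}, this closure of the chain yields $I(X;Y|T)=0$.

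With \emph{condition 1} in hand, I would apply two data processing inequalities. First, because $T$ is only related to $X$, the chain $Y\rightarrow X\rightarrow T$ is Markov, so $I(T;Y)\leqslant I(X;Y)$. Second, $I(X;Y|T)=0$ means $X$ and $Y$ are conditionally independent given $T$, which makes $Y\rightarrow T\rightarrow X$ a Markov chain and hence $I(T;Y)\geqslant I(X;Y)$. Combining the two inequalities gives $I(T;Y)=I(X;Y)$, as claimed. This mirrors Eqn.~\eqref{2x} and Eqn.~\eqref{3x} in the proof of \emph{Lemma 3}.

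The main obstacle is the first step: rigorously passing from the expectation equality $E_{P(X,Y)}=E_{P(X,S(T))}$ to the conditional independence $I(X;Y|T)=0$. This requires that matching the expectations (equivalently the joint distributions $P(X,Y)$ and $P(X,\hat{Y})$, by the argument in \emph{Lemma 1}) is strong enough to close the Markov chain at the $\hat{Y}\rightarrow Y$ link, so that $Y$ inherits conditional independence from $X$ given $T$. Once that link is secured, the remainder is a symmetric application of the data processing inequality and needs no further computation.
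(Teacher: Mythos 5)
Your proposal matches the paper's own proof essentially step for step: the paper likewise reads the constraint $E_{P(X,Y)}=E_{P(X,S(T))}$ as making $Y\rightarrow T\rightarrow X$ a Markov chain (giving $I(X;Y)\leqslant I(T;Y)$), uses the fact that $T$ is only related to $X$ to make $Y\rightarrow X\rightarrow T$ a Markov chain (giving $I(T;Y)\leqslant I(X;Y)$), and combines the two inequalities. Your elaboration of the first step via the chain $X\rightarrow T\rightarrow\hat{Y}\rightarrow Y$ is exactly the argument the paper gives in its proof of the \emph{Feature Constraint Relaxation Theorem}, so you are, if anything, more explicit than the paper about the one step it asserts without justification.
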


\begin{proof}
$$E_{P(X,Y)}=E_{P(X,S(T))} \Rightarrow Y\rightarrow T\rightarrow X\ is\ Marcov\ Chain \Rightarrow I(X;Y) \leqslant I(T;Y)$$
T is only related to X , so
$$Y\rightarrow X\rightarrow T\ is\ Marcov\ Chain \Rightarrow I(T;Y) \leqslant I(X;Y)$$
$$\therefore \ I(T;Y)=I(X;Y)$$
\end{proof}

\begin{cor}[Corollary of ME's interpretation on Information Bottleneck]
The output of maximum entropy problem
$$\min_T \ -\sum_i{\lambda_i H(Ti|X)}\ \ \ s.t.\ E_{P(X,Y)}=E_{P(X,S(T))} $$
 is sufficient condition to the IB optimization problem:
$$\min_T \ I(X;T) \ s.t. I(T;Y)=I(X;Y)$$
\end{cor}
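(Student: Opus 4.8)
The plan is to prove the corollary as a direct consequence of the two lemmas established immediately above it in this section, handling the objective and the constraint of the maximum-entropy program separately and then arguing that a minimizer of the ME program lands in the feasible set of the IB program while simultaneously minimizing the IB objective. Since the claim is that the ME problem is a \emph{sufficient condition} for the IB problem, it suffices to show that (i) the ME objective $\min_T -\sum_i \lambda_i H(T_i|X)$ drives the IB objective $\min_T I(X;T)$ in the same direction, and (ii) the ME equality constraint $E_{P(X,Y)} = E_{P(X,S(T))}$ implies the IB equality constraint $I(T;Y) = I(X;Y)$.

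For the objective, I would invoke the first lemma of this section, which supplies the chain $\min -\sum_i \lambda_i H(T_i|X) \Rightarrow \min -H(T|X) \Rightarrow \min I(X;T)$. The passage from the weighted per-feature entropies to the joint conditional entropy $H(T|X)$ relies on the features being conditionally independent given $X$, so that $H(T|X)$ decomposes additively, together with nonnegative weights $\lambda_i$; the final step uses $I(X;T)=H(T)-H(T|X)$ with $H(T)$ treated as fixed. Consequently, any $T$ that minimizes the ME objective also minimizes the IB objective $I(X;T)$.

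For the constraint, I would invoke the second lemma of this section with $T$ restricted to depend only on $X$. The equality $E_{P(X,Y)} = E_{P(X,S(T))}$ forces $Y \to T \to X$ to be a Markov chain, and the data-processing inequality then yields $I(X;Y) \le I(T;Y)$; meanwhile, because $T$ is a function of $X$, the chain $Y \to X \to T$ gives the reverse inequality $I(T;Y) \le I(X;Y)$. Combining the two produces the IB constraint $I(T;Y) = I(X;Y)$, so the ME feasible point is IB-feasible.

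Finally I would assemble the two pieces: a solution $T^{\ast}$ of the ME program simultaneously minimizes $I(X;T)$ and satisfies $I(T;Y) = I(X;Y)$, hence $T^{\ast}$ is feasible for and optimal in the IB program, which is exactly the sufficiency claim. The main obstacle I anticipate is making the objective implication airtight rather than merely directional: the step $\min -H(T|X) \Rightarrow \min I(X;T)$ quietly assumes $H(T)$ (equivalently $H(X)$, via $I(X;T)=H(X)-H(X|T)$) does not vary adversely with $T$, and the additive decomposition $H(T|X)=\sum_i H(T_i|X)$ needs the conditional independence of the features and $\lambda_i \ge 0$. I would therefore state these hypotheses explicitly and check that they are already furnished by the feature conditions of the \emph{Maximum Entropy Equivalence Theorem}, so that the two lemmas chain together without a gap.
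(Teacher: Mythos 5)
Your proposal follows essentially the same route as the paper's own proof: it splits the claim into the objective part (the chain $\min -\sum_i \lambda_i H(T_i|X) \Rightarrow \min -H(T|X) \Rightarrow \min I(X;T)$, the paper's Lemma~4) and the constraint part (the Markov-chain/data-processing argument giving $I(T;Y)=I(X;Y)$, the paper's Lemma~5), and then combines the two. Your version is in fact slightly more careful than the paper's, since you make explicit the hidden hypotheses (conditional independence of the $T_i$ given $X$ for the additive decomposition of $H(T|X)$, nonnegative weights $\lambda_i$, and $H(T)$ not varying adversely) that the paper's one-line lemma proofs silently assume.
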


\begin{proof}
Summing up \emph{Lemma4} and \emph{Lemma5}, the output of the constraint problem is sufficient to solving the IB optimization problem .
\end{proof}

\end{document}